\documentclass[letterpaper]{article} 
\usepackage[draft]{aaai2026}  
\usepackage{times}  
\usepackage{helvet}  
\usepackage{courier}  
\usepackage[hyphens]{url}  
\usepackage{graphicx} 
\urlstyle{rm} 
\usepackage{natbib}  
\usepackage{caption} 
\frenchspacing  
\setlength{\pdfpagewidth}{8.5in}  
\setlength{\pdfpageheight}{11in}  
%
\usepackage{algorithm}
\usepackage{algorithmic}

%
\usepackage{newfloat}
\usepackage{listings}
\DeclareCaptionStyle{ruled}{labelfont=normalfont,labelsep=colon,strut=off} 
\lstset{%
	basicstyle={\footnotesize\ttfamily},
	numbers=left,numberstyle=\footnotesize,xleftmargin=2em,
	aboveskip=0pt,belowskip=0pt,%
	showstringspaces=false,tabsize=2,breaklines=true}
\floatstyle{ruled}
\newfloat{listing}{tb}{lst}{}
\floatname{listing}{Listing}
%
\pdfinfo{
/TemplateVersion (2026.1)
}

\setcounter{secnumdepth}{2} 

%


\title{Foundations of Interpretable Models}

\title{Foundations of Interpretable Models}
\author {
    Pietro Barbiero\textsuperscript{\rm 1},
    Mateo Espinosa Zarlenga\textsuperscript{\rm 2},
    Alberto Termine\textsuperscript{\rm 3},\\
    Mateja Jamnik\equalcontriblast\textsuperscript{\rm 2},
    Giuseppe Marra\equalcontriblast\textsuperscript{\rm 4}
}
\affiliations {
    \textsuperscript{\rm 1}IBM Research (CH),
    \textsuperscript{\rm 2}University of Cambridge (UK),
    \textsuperscript{\rm 3}SUPSI, IDSIA (CH),
    \textsuperscript{\rm 4}KU Leuven (BE)\\
    pietro.barbiero@ibm.com, 
    \{me466,mj201\}@cam.ac.uk,
    alberto.termine@supsi.ch,
    giuseppe.marra@kuleuven.be
}



\usepackage{footmisc}

\usepackage{adjustbox}
\usepackage{graphicx}
\usepackage{tikz-cd}

\usepackage{amssymb}
\usepackage{amsmath}

\usepackage{amsthm}
\theoremstyle{plain}

\usepackage{mdframed}

\newtheorem*{lemma*}{Lemma}
\newtheorem*{proposition*}{Proposition}
\newtheorem*{corollary*}{Corollary}
\newtheorem*{conjecture*}{Conjecture}
\newmdtheoremenv[
  backgroundcolor=gray!20,   
  linecolor=gray!50,         
  innertopmargin=2pt,
  innerbottommargin=2pt,
  skipabove=2pt,
  skipbelow=2pt
]{theorem}{Theorem}

\theoremstyle{definition}
\newtheorem{example}{Example}

\newtheorem*{definition*}{Definition}
\newtheorem*{example*}{Example}
\newtheorem*{prob*}{Problem}
\newtheorem*{remark*}{Remark}
\newtheorem*{notation*}{Notation}
\newtheorem*{exer*}{Exercise}

\newmdtheoremenv[
  backgroundcolor=gray!20,   
  linecolor=gray!50,         
  innertopmargin=2pt,
  innerbottommargin=2pt,
  skipabove=2pt,
  skipbelow=2pt
]{definition}{Definition}

\usepackage{mathtools}

\usepackage{tikz}
\usetikzlibrary{arrows.meta, positioning, quotes, fit, shapes, backgrounds, calc}

\newlength{\nodesep}
\setlength{\nodesep}{1cm} 
\newlength{\innersep}
\setlength{\innersep}{2pt}  
\newlength{\sep}
\setlength{\sep}{1.3cm}    

\tikzset{
    observed/.style={circle, draw, thick, fill=gray!50, minimum size=.7cm},
    unobserved/.style={circle, draw, thick, minimum size=.7cm},
    blank/.style={circle, draw, thick, dotted, minimum size=.7cm, opacity=0.5},
    square1/.style={draw, fill=gray!20, minimum size=.3cm},
    square2/.style={draw, fill=gray!50, minimum size=.3cm},
    square3/.style={draw, fill=gray!100, minimum size=.3cm},
    true/.style = {circle, draw=none, thick, fill=green!50, minimum width=.5cm, minimum height=.5cm, inner sep=0pt},
    false/.style = {circle, draw=none, thick, fill=red!50, minimum width=.5cm, minimum height=.5cm, inner sep=0pt},
    plain/.style = {circle, draw, thick, fill=none, minimum width=.5cm, minimum height=.5cm, inner sep=0pt},
    arrowstyle/.style={->, thick, rounded corners},
    arrowstyledash/.style={->, thick, rounded corners, dash pattern=on 2pt off .6pt, gray},
    adjweight/.style={->, dashed, line width=.1mm, colour=blue!50},
    adjweightbig/.style={->, line width=.6mm, colour=blue!50},
    cembtrue1/.style={draw, fill=green!20, minimum size=.3cm},
    cembtrue2/.style={draw, fill=green!50, minimum size=.3cm},
    cembtrue3/.style={draw, fill=green!100, minimum size=.3cm},
    cembfalse1/.style={draw, fill=red!20, minimum size=.3cm},
    cembfalse2/.style={draw, fill=red!50, minimum size=.3cm},
    cembfalse3/.style={draw, fill=red!100, minimum size=.3cm},
    operation/.style={draw, thick, fill=blue!20, minimum size=0.6cm,  rounded corners=5pt, inner sep=1pt, outer sep=0pt, align=center},
}

\newcommand{\exampleConceptRound}{
\begin{tikzpicture}[
    node distance = .3cm,
]

\node[circle, draw, thick, fill=gray!30, fill opacity=0.3, minimum size=3cm] (models) {};

\node[circle, draw, thick, fill=gray!30, fill opacity=0.3, minimum size=3cm, right=1cm of models] (sentences) {};

\node (apple) at (-.3,.3) {\inlineimg{appler}};
\node (donut) at (.3,.5) {\inlineimg{zerored}};
\node (ball) at (.4,-.1) {\inlineimg{onered}};
\node (avocado) at (-.3,-.9) {\inlineimg{zeroblue}};
\node (avocado) at (.4,-1.) {\inlineimg{oneblue}};

\node (green) at (4.7,-.1) {\tiny $\texttt{one} \wedge \neg \texttt{fruit}$};
\node (round) at (3.9,.8) {\texttt{red}};
\node (sweet) at (3.3,-.2) {\texttt{zero}};
\node (soft) at (4.4,-.7) {\texttt{even}};

\node (objects) at (-1.5, 1) {$U$};
\node (sentences) at (5.5,1) {$S$};

\begin{pgfonlayer}{background}
    \node[draw, thick, fill=gray!30, rounded corners, inner sep=.2cm, yshift=.1cm, fit=(apple) (donut) (ball), label=above:{$M$}] (M) {};

    \node[draw, thick, fill=gray!30, rounded corners, inner sep=.cm, yshift=.cm, fit=(round), label=right:{$T$}] (T) {};

    \draw[arrowstyledash, red] (T)  to [bend left=20] node[midway, below] {$\beta$} (M);
    \draw[arrowstyledash, red] (M)  to [bend left=20] node[midway, above] {$\gamma$} (T);
\end{pgfonlayer}

\end{tikzpicture}
}

\input{tikz/tikzit_compositional}

\newcommand{\inlineimg}[1]{%
  \smash{\raisebox{-.2\height}{\includegraphics[height=.9em]{figs/#1.png}}}%
}
\newcommand{\inlineimgsmall}[1]{%
  \smash{\raisebox{-.2\height}{\includegraphics[height=.7em]{figs/#1.png}}}%
}
\newcommand{\inlineimgrotate}[1]{%
  \smash{\raisebox{-.2\height}{\includegraphics[height=.9em, angle=90]{figs/#1.png}}}%
}

\lstdefinestyle{python}{
  language=Python,
  basicstyle=\ttfamily\small,            
  keywordstyle=\color{blue}\bfseries,    
  commentstyle=\color{green!60!black}\itshape, 
  stringstyle=\color{orange},            
  identifierstyle=\color{black},         
  showstringspaces=false,                
  numbers=left,                          
  numberstyle=\tiny\color{gray},         
  stepnumber=1,                          
  numbersep=5pt,                         
  rulecolor=\color{gray},                
  breaklines=false,                       
  captionpos=b,                          
  keepspaces=true                        
}

\lstset{style=python}

\usepackage{adjustbox}

\usepackage{enumitem}

\begin{document}
\nocopyright

\maketitle

\begin{abstract}
We argue that existing definitions of interpretability are not \textit{actionable} in that they fail to inform users about general, sound, and robust interpretable model design. This makes current interpretability research fundamentally ill-posed. To address this issue, we propose a definition of interpretability that is general, simple, and subsumes existing informal notions within the interpretable AI community. We show that our definition is actionable, as it directly reveals the foundational properties, underlying assumptions, principles, data structures, and architectural features necessary for designing interpretable models. Building on this, we propose a general blueprint for designing interpretable models and introduce the first open-sourced library with native support for interpretable data structures and processes.
\end{abstract}

\begin{links}
    \link{Code}{https://github.com/pyc-team/pytorch_concepts}
\end{links}

\section{Introduction}
Recent years have seen a surge in interpretable models whose decisions can be easily understood by humans. These models now offer a performance comparable to that of powerful black-box models like Deep Neural Networks (DNNs)~\citep{senn,protopnet,cem}, and are increasingly employed to diagnose errors, ensure fairness, and comply with legal standards~\citep{lee2021development, meng2022interpretability}.

In this paper, we argue that current research in interpretable Artificial Intelligence (AI) is ill-posed for two reasons. First, the community has failed to formalise an agreed-upon definition of interpretability. Second, although previous attempts to define interpretability offer some intuition on what one may consider to be ``interpretable AI'',
they remain \textit{unactionable}: it is unclear how they can be directly translated into general design principles for interpretable models. 

For instance, ~\citet{kim2016examples}, \citet{biran2017explanation}, and 
\citet{miller2019explanation} informally suggested that \emph{a method is interpretable if a user can correctly and efficiently predict the method's results}.
More recently,~\citet{murphy2023probabilistic} claimed that \emph{there is no universal, mathematical definition of interpretability, and there never will be}. While mathematical definitions of interpretability exist and have been influential in other fields -- such as in logic systems~\citep{tarski1953undecidable} -- we argue that such rigorous frameworks (1)~often rely on substantial assumptions, and (2)~have not been used to directly deduce consequences for and drive research in interpretable AI.
This lack of a clear, actionable, and contextualised definition imposes a barrier to identifying the key challenges, principles, and architectural features necessary for designing interpretable AI models.



\paragraph{Contributions} This paper formulates AI interpretability as a well-posed problem. We achieve this goal as follows:
\begin{itemize}[topsep=3pt, leftmargin=9pt, itemsep=0pt]
    \item \textbf{We propose a general, simple, and \textit{actionable} definition of interpretability}. We formalise interpretability as \textit{inference equivariance}, defining a function as interpretable if the inference mechanisms of both the function and its user reach the same results given the same inputs. We show that although this definition encompasses existing informal notions of interpretability, directly verifying inference equivariance is intractable (§\ref{sec:equiv}).
    
    \item \textbf{We identify assumptions and principles that make interpretability tractable and draw consequences on model design}. Specifically, we demonstrate the actionability of our definition by pinpointing concrete assumptions, principles, and data structures that make interpretability tractable in practice (§\ref{sec:breaking}, §\ref{sec:concept}, and §\ref{sec:concept-based-eq}). Based on these results, we draw general consequences for the design of interpretable models
    (§\ref{sec:principles}).
    
    \item \textbf{We propose a blueprint for interpretable models}. Building on our definition, we (1)~propose a general modelling paradigm for building interpretable models (§\ref{sec:blueprint}), and (2)~introduce an open-source library with native support for interpretable data structures and processes.
    
\end{itemize}

\section{Interpretability as Inference Equivariance} \label{sec:equiv}
We aim to identify the key challenges, assumptions, and principles underlying interpretability and utilise them to design interpretable models. Therefore, our first objective is to propose an \textit{actionable} definition of interpretability that informs model design.
As a running example, we consider a probabilistic model $P(Y \mid X; m)$ parametrised by an unknown function $m$ that predicts whether an object $\omega \in \Omega$ described by features $X \subseteq \mathbb{R}^D$ belongs to a class $Y\subseteq \mathbb{N}$ (without loss of generality, we assume we work with classification tasks). At this stage, we assume that we observe both $X$ and $Y$, but we do not know yet what $X$ and $Y$ represent. Given a set of example observations (e.g., $\omega \in \{\inlineimg{zerored}, \inlineimg{zeroblue}, \inlineimg{oneblue}\}$), we can describe $m$ via the following table:
\begin{center}
\text{\textbf{Table 1}: Tabular representation of a function $m$.}
\label{eq:cpt}
\end{center}
\begin{align*} 
{\begin{array}{c|cc|c}
\omega & X_1 & X_2 & Y = m(X) \\
\hline
\inlineimg{zerored} & 0 & 1 & 1 \\
\inlineimg{zeroblue} & 0 & 0 & 1 \\
\inlineimg{oneblue} & 1 & 0 & 0 \\
\end{array}}
\end{align*}

In some cases, we might be able to associate a description with some variables. For example, we could associate the strings ``\texttt{one}'' to $X_1$, ``\texttt{red}'' to $X_2$, and ``\texttt{even}'' to $Y$. This association establishes a particular relation between the function $m$ and human knowledge (e.g., number theory). For instance, given the object $\omega=\inlineimg{zeroblue}$ we could either:
\begin{itemize}[topsep=3pt, leftmargin=9pt, itemsep=0pt]
    \item apply $m$ on the object's features $x = X_{1,2}(\omega) = (0, 0)$ to compute $Y=1$, and then translate the result $Y=1$ into ``human terms'', getting $\texttt{even} = \text{yes}$;
    \item or we could translate the object's features $X_{1,2}(\omega) = (0, 0)$ in ``human terms'', getting $(\texttt{one}, \texttt{red}) = (\text{no}, \text{no})$, and then predict parity ourselves to get $\texttt{even} = \text{yes}$.
\end{itemize}
This equivalence between the function $m$ and our inference mechanism can be represented as a commutative diagram:
{\small 
\[
\begin{adjustbox}{max width=\linewidth}
\begin{tikzcd}[column sep=3.5cm, row sep=normal]
(X_1, X_2) = (0, 0) \arrow[r, "\text{unknown function }m"] \arrow[d, "\text{``translate''}"'] & Y=1 \arrow[d, "\text{``translate''}"] \\
{\scriptscriptstyle
\shortstack{
$(\texttt{one}, \texttt{red}) =$
$(\text{no}, \text{no})$
}
} \arrow[r, "\text{human inference } h"']  & \texttt{even}=\text{yes}
\end{tikzcd}
\end{adjustbox}
\]
}
If this diagram commutes for \textit{any} input $x = X(\omega)$ (i.e., if we reach the same result following different paths), then the function $m$ has a one-to-one correspondence with our knowledge. This leads us to an actionable procedure, akin to the so-called Turing test~\cite{turing1950computing}, for establishing whether an unknown function is interpretable. Specifically:
\begin{quote}
\emph{A function is interpretable to a user if the function's and the user's inference mechanisms are equivariant.}
\end{quote}
We formalise this criterion as follows. 
\begin{definition}(Interpretability as inference equivariance) \label{def:infeq}
    A function $m$ is \textbf{interpretable} for a user represented by a function $h$ via a translation $\tau$ \textit{iff} the following diagram commutes for any realisation of $X^{(m)}$:
    \[
    {\begin{adjustbox}{max width=\linewidth}
    \begin{tikzcd}[column sep=normal, row sep=small]
    X^{(m)} \arrow[r, "m"] \arrow[d, "\tau"'] & Y^{(m)} \arrow[d, "\tau"] \\
    X^{(h)} \arrow[r, "h"']  & Y^{(h)}
    \end{tikzcd}
    \end{adjustbox}}
    \]
\end{definition}

\paragraph{Consequences for Interpretability} 
The definition above is effective because: (1)~it subsumes and formalises current informal definitions and intuitions within the interpretable AI community~\citep{kim2016examples,hewittposition} (see §\ref{app:other-def}); (2)~it is significantly simpler than prior formal definitions of interpretability proposed in formal systems~\citep{tarski1953undecidable} and causality~\citep{rubenstein2017causal,geiger2024causal,marconato2023interpretability} as it situates the definition in a typical machine learning context making fewer structural assumptions; and, most importantly, (3)~it is actionable as it enables us to identify concrete consequences that uniquely characterise interpretability in AI. Some of these consequences include (see §\ref{app:inference-eq} for an extended list):
\begin{enumerate}
    \item 
    \textbf{In principle, any function is interpretable.} We ``only'' need a translation $\tau$ and a function $h$ to make the diagram commute. For instance, the scientific method is an effective technique for observing an unknown function $m$ and formulating hypotheses on $\tau$ and $h$ that explain the behaviour of the function $m$. Note that although all functions can be interpretable, not all interpretable functions can be easily understood by all users (i.e., interpretability is relative to a user $h$).
    \item 
    \textbf{Interpretability is a spectrum.} If the diagram commutes for any possible $X^{(m)}$, then the function $m$ is completely interpretable by $h$. However, even if the diagram commutes only for a subset of $X^{(m)}$, the function $m$ can still be regarded as \textit{partially interpretable}. Thus, interpretability is best understood as a spectrum of degrees rather than an absolute, all-or-nothing property.
    \item 
    \textbf{Naively verifying interpretability via inference equivariance is intractable.}
    If we verify inference equivariance for a set of training samples, we do not guarantee that inference equivariance will hold for unseen samples. To guarantee this, we need to verify inference equivariance for any possible configuration of the inputs. However, this requires a table with $\mathcal{O}(\exp(D))$ entries. This means that if we consider $D = 10 \times 10$ binary pixels as features $X$, we already need more entries in the table than the number of atoms in the observable universe. 
    \item 
    \textbf{Many translations exist, but some are not sound.} In our example, we verified inference equivariance by translating $X_1(\inlineimg{zeroblue})=0$ to $\texttt{one} = \text{no}$, but translating $X_1(\inlineimg{zeroblue})=0$ to $\texttt{unum} = \text{no}$ could have also worked. However, the diagram would not commute if we translate both $X_1(\inlineimg{zeroblue})=0$ and $X_1(\inlineimg{oneblue})=1$ with $\texttt{one} = \text{yes}$.
\end{enumerate}
While the first observation gives us hope, the last two observations raise the following questions: under which assumptions is inference equivariance tractable? When is $\tau$ a sound translation? To answer these questions, we first identify assumptions and principles that make inference equivariance tractable, and then we characterise sound translations.
In~§\ref{sec:breaking} we show how standard assumptions in representation learning significantly simplify inference equivariance in terms of a set of variables $C$ that is much smaller than the set $X$. In~§\ref{sec:concept} we show that by properly characterising the variables~$C$, sound translations can be described as a by-product.

\section{Effective Equivariance Verification} \label{sec:breaking}

Circumventing the intractability of inference equivariance requires compressing the table representation of $m$ (e.g., Table~1). If this compression exists, then we can verify equivariance using a \textit{smaller} set of features $C$ characterising \emph{only the essential properties} of each object $\omega$. The following definition formalises compression properties in our context.
\begin{definition}(Lossless latent space)
Given a feature space $X \subseteq \mathbb{R}^D$ and a task space $Y \subseteq \mathbb{N}$, then $C \subseteq \mathbb{R}^K$ is a \textbf{lossless latent space} if $C$:
\begin{enumerate}
    \item represents $X$ in fewer dimensions ($K \ll D)$;
    \item preserves task-relevant information: $I(Y; C) \approx I(Y; X)$, where $I(\cdot; \cdot)$ denotes mutual information.
\end{enumerate}
\end{definition}
The assumption underlying this compression is often known as the \textit{manifold hypothesis} \citep{cayton2008algorithms}, a standard assumption for all representation learning systems~\citep{bengio2013representation}, including neural networks.

\paragraph{Conditional interpretability}
It may seem that by introducing latent variables $C$, one unnecessarily increases the size of Table~1. However, the following allows us to verify inference equivariance considering \textit{only} features in $C$: 

\begin{definition}(Conditional interpretability)
A variable $Y$ is \textbf{conditionally interpretable} given $\{C_i\}_{i=1}^K$ if
\[
I(Y; X_j \mid \{C_i\}_{i=1}^K) = 0 \quad \forall X_j \notin \{C_i\}_{i=1}^K.
\]
\end{definition}
The set $\{C_i\}_{i=1}^K$ is often known as a \textbf{Markov blanket} \citep{pearl1988probabilistic} of the variable $Y$ and will be denoted by $\mathcal{B}(Y)$. Intuitively, the definition means that once we know $\mathcal{B}(Y)$, any variable $X_j \notin \mathcal{B}(Y)$ does not provide additional information to explain $Y$. So, when verifying inference equivariance for a model $P(Y \mid \mathcal{B}(Y))$, we can ignore all $X_j \notin \mathcal{B}(Y)$.

\paragraph{[Consequence] Manifold-induced re-parametrisation}
Under the manifold assumption, we can use conditional interpretability to rewrite any model $P(Y \mid X)$ as follows:
\begin{equation} \label{eq:reparametrisation}
    P(Y, C, X) = P(Y \mid C) P(C \mid X), \quad \text{s.t.} \ C \coloneqq \mathcal{B}(Y) 
\end{equation}
This means that we can focus exclusively on the conditional distribution $P(Y \mid C)$ to explain the behaviour of $Y$.  As a result, we can rewrite any table representing a function $P(Y \mid X)$ based on a set of variables $C$, which is much smaller than the number of features $X$, thus reducing the table size from $\mathcal{O}(\exp(D))$ to $\mathcal{O}(\exp(K))$. 

\paragraph{[Consequence] Manifold-induced generalisation}
Lossless latent spaces not only reduce the number of columns in a function's tabular representation (e.g., Table~1), but also reduce the number of unique rows through an effect called \emph{generalisation}~\citep{kawaguchi2017generalization,neyshabur2017exploring}. In particular, when we compress information, objects having \emph{different} features $X$ may end up having the \emph{same} features $C$. This means that for any assignment to variables $C$, we can verify inference equivariance for multiple objects in one shot. In fact, equivariance would still hold for any object $\omega' \neq \omega$ as long as $C(\omega') = C(\omega)$. 

\begin{example}
    Verifying inference equivariance for $\omega_1$ (left diagram) does not guarantee that inference equivariance would hold for $\omega_2$ (right diagram) since $X(\omega_1) \neq X(\omega_2)$: 
    {\small
    \[
    \begin{tikzcd}[column sep=small, row sep=small]
    X^{(m)}(\omega_1) = \inlineimg{zerored} \arrow[d, "\tau"'] \arrow[r, "m"] & Y^{(m)}(\omega_1) \arrow[d, "\tau"] \\
    X^{(h)}(\omega_1) = \inlineimg{zerored} \arrow[r, "h"'] & Y^{(h)}(\omega_1)
    \end{tikzcd} 
    \
    \begin{tikzcd}[column sep=small, row sep=small]
    X^{(m)}(\omega_2) = \inlineimg{zeroblue} \arrow[d, "\tau"'] \arrow[r, "m"] & Y^{(m)}(\omega_2) \arrow[d, "\tau"] \\
    X^{(h)}(\omega_2) = \inlineimg{zeroblue} \arrow[r, "h"'] & Y^{(h)}(\omega_2)
    \end{tikzcd}
    \]
    }
    Conversely, if $C(\omega_1) = C(\omega_2)$ (e.g., both represented by the embedding $[-2.2, 1.3, 0.1] \in \mathbb{R}^3$), verifying inference equivariance on a compressed space $C$ for $\omega_1$:
    {\small
    \[
    \begin{tikzcd}[column sep=normal, row sep=small]
    C^{(m)}(\omega_1) = [-2.2, 1.3, 0.1] = C^{(m)}(\omega_2) \arrow[d, "\tau"'] \arrow[r, "m"] & Y^{(m)}(\omega_1) \arrow[d, "\tau"] \\
    C^{(h)}(\omega_1) \arrow[r, "h"'] & Y^{(h)}(\omega_1)
    \end{tikzcd}
    \]
    }
    guarantees that inference equivariance holds for $\omega_2$. Hence, if $m(C^{(m)}(\omega_1))$ is interpretable for the observer~$h$, then interpretability generalises also to $m(C^{(m)}(\omega_2))$.
\end{example}

\section{Concepts \& Sound Translations} \label{sec:concept}
Having motivated how the manifold hypothesis induces information to be compressed into a set of variables $C$, we show that by properly characterising variables $C$, we can get sound translations as a by-product. In this case, we will refer to variables $C$ as \textbf{concepts}.
We start by defining what a concept is, drawing from Formal Concept Analysis \citep{ganter1996formal} and Institution Theory \citep{goguen2005concept,diaconescu2008institution}. We extend this definition by providing a probabilistic interpretation of concepts, which corresponds to the informal notion commonly used in concept-based interpretability \citep{cbm,schut2025bridging}. Then, we show how sound translations \textit{preserve} concepts. Based on these insights, we recast our interpretability definition as \textit{concept-based inference equivariance}, a tractable formulation that enables the verification of translation soundness.


\subsection{What Is a ``Concept''?}
How can people communicate the notion of ``red''? Traditionally, we do this via two main ways: we can (1)~use a sequence of letters such as \texttt{red}, or (2)~refer to a concrete example such as \inlineimg{onered}. 
In a sense, communicating a ``concept'' requires that people agree on two implicit mappings: (1)~given the specific symbol \texttt{red}, we can associate it with concrete examples such as \inlineimg{onered}; and (2)~given an example such as \inlineimg{onered}, we can associate it with a symbol \texttt{red}.
As a result, we could give a first intuition of (1)~a \textbf{concept} as a relation between set of concrete examples (e.g., $\{\inlineimg{onered}, \inlineimg{zerored}, \inlineimg{appler}, \dots\}$) and a symbol (e.g., \texttt{red}); and (2)~a \textbf{sound translation} as a ``concept-preserving'' map associating different symbols (e.g., \texttt{red} and \texttt{rubrum}) to the same objects (e.g., $\{\inlineimg{onered}, \inlineimg{zerored}, \inlineimg{appler}, \dots\}$).
In what follows, we dive deeper into understanding what a concept entails through a concrete example. 
\begin{example}
Consider a set of sentences $S=\{ \texttt{red}, \texttt{one} \wedge \neg \texttt{fruit}, \texttt{zero}, \texttt{even} \}$ and a set of objects $U$ with the following relations with each sentence:
{\small 
\[
\begin{array}{c|ccc|c}
 & \texttt{red} & \texttt{one} \wedge \neg \texttt{fruit} & \texttt{zero} & \texttt{even} \\
\hline
\inlineimg{onered} & 1 & 1 & 0 & 0 \\
\inlineimg{zeroblue} & 0 & 0 & 1 & 1 \\
\inlineimg{zerored} & 1 & 0 & 1 & 1 \\
\inlineimg{oneblue} & 0 & 1 & 0 & 0 \\
\inlineimg{appler} & 1 & 0 & 0 & 0 \\
\end{array}
\]
}
 Consider a set of sentences $T=\{\texttt{red}\} \subseteq S$ and let $\beta$ be a function that gives us all objects $\omega \in M \subseteq U$ satisfying each $\phi \in T$ (which we traditionally denote as $\omega \models \phi$):
 \[
    M = \beta(T) = \beta(\{\texttt{red}\}) = \{ \inlineimg{appler}, \inlineimg{onered}, \inlineimg{zerored} \}
 \]
 If we consider a function $\gamma$ giving us the set of sentences that are true for all objects $M=\{ \inlineimg{appler}, \inlineimg{onered}, \inlineimg{zerored} \}$, this returns:
 \[
    \gamma(\beta(T)) = \gamma(\{ \inlineimg{appler}, \inlineimg{onered}, \inlineimg{zerored} \}) = \gamma(M) = \{\texttt{red}\} = T
 \]
 Note how the set of objects $M = \{ \inlineimg{appler}, \inlineimg{onered}, \inlineimg{zerored} \}$ and the sentence $T=\{\texttt{red}\}$ satisfy a specific ``closure'' condition:
\begin{align*}
    \resizebox{.28\textwidth}{!}{\exampleConceptRound}
\end{align*}
 \[
    T = \gamma(M) \quad \text{and} \quad M = \beta(T)
 \]
 \begin{center}
\text{\textbf{Figure 1}: Closure between objects $M$ and sentences $T$.}
\end{center}

 \noindent
 Hence, we can refer to the concept ``red'' as the tuple $(\{\texttt{red}\}, \{ \inlineimg{appler}, \inlineimg{onered}, \inlineimg{zerored} \}, \beta, \gamma)$.
 Note how (1)~this closure is not satisfied by the objects $M = \{ \inlineimg{appler}, \inlineimg{onered}, \inlineimg{zerored}, \inlineimg{zeroblue} \}$ since \inlineimg{zeroblue} does not satisfy the sentence \texttt{red}; (2)~if we add a sentence to $T$, we end up with a more specific concept since fewer objects satisfy all sentences: $(\{\texttt{red}, \texttt{zero}\}, \{ \inlineimg{zerored} \}, \beta, \gamma)$.
\end{example}

Following~\citet{goguen2005concept}, we formalise a concept via a set of objects $U$, a set of sentences $S$, and two functions:
\begin{itemize}[topsep=3pt, leftmargin=9pt, itemsep=0pt]
    \item  $\beta : \mathcal{P}(S) \to \mathcal{P}(U)$ is a function producing the set of all objects $\omega$ that satisfy every sentence $\varphi$ in $T \subseteq S$ (where $\mathcal{P}(A)$ denotes the power set of $A$):
        \[
        \beta(T) = \{ \omega \in U \; \mid \; \omega \models \varphi \text{ for all } \varphi \in T \},
        \]
    \item $\gamma : \mathcal{P}(U) \to \mathcal{P}(S)$ is a function producing the set of all sentences satisfied by every object in $M \subseteq U$:     
        \[
        \gamma(M) = \{ \varphi \in S \; \mid \; \omega \models \varphi \text{ for all } \omega \in M \}.
        \]
\end{itemize}

\begin{definition}[Concept]
Given a set of objects $U$ and a set of sentences $S$, a \textbf{concept} is a tuple $(T \subseteq S, M \subseteq U, \gamma, \beta)$ such that $T = \gamma(M) \text{ and } M = \beta(T)$.
\end{definition}

\subsection{Probabilistic Interpretation of Concepts}
We can extend the definition above by providing a probabilistic interpretation of concepts and demonstrating how it aligns with commonly accepted notions in the concept-based interpretability literature~\citep{tcav,cbm}.

If we allow uncertainty over objects, the random variable $X: \Omega \to \mathbb{R}^D$ describes the features of an object drawn from this unknown distribution. We can interpret \textbf{concept membership} $C_i$ as an indicator random variable of the event ``the object belongs to the set of objects $M_i$ of the $i$-th concept'':
\[
C_i = \mathbb{I}_{X(\omega) \in M_i} =
\begin{cases}
    1, \quad \text{if $X(\omega)$ belongs to $M_i$}\\
    0, \quad \text{otherwise.}\\
\end{cases}
\]
\begin{example}
The random object $X(\omega) = \inlineimg{onered}$ belongs to the concept ``red'' since $\inlineimg{onered} \in \{\inlineimg{appler}, \inlineimg{onered}, \inlineimg{zerored}\} = M_{\texttt{red}}$. This makes the concept membership\footnote{To improve readability in examples, we abuse notation and use strings for subscripts instead of natural numbers.} $C_{\texttt{red}} = \mathbb{I}_{\inlineimgsmall{onered} \in M_{\texttt{red}}} = 1$.
\end{example}
If concept membership is not given but rather uncertain, the indicator function becomes a probability function:
\[
g_i: X \to [0,1]
\]
where $g_i(x)$ is the probability that $x$ belongs to the $i$-th concept. For any $x = X(\omega)$, the membership indicator $C_i$ is reduced to a Bernoulli random variable with parameter $g_i$: 
\[
    P(C_i=1 \mid X=x) = g_i(x) 
\]
This corresponds to standard notions of ``concepts'' in general concept-based interpretable models such as \textit{Concept Bottleneck Models} (CBMs)~\cite{cbm}.
\begin{example}
Suppose that membership in ``red'' of $X(\omega)=\inlineimg{onered}$ is uncertain, then $g_{\texttt{red}}$ gives us the probability that the object is red: $g_{\texttt{red}}(\inlineimg{onered}) = P(C_{\texttt{red}} = 1 \mid X = \inlineimg{onered}) = 0.9$.
\end{example}
We can easily extend this definition to accommodate diverse concept distributions. For example, the concept ``digit'' in MNIST has a categorical distribution, while the concept ``red intensity'' may have a Beta distribution.

\subsection{When Is a Translation Sound?}
We now show how concepts allow the characterisation of sound translations. In particular, concepts emphasise that (1)~translations are functions between (purely syntactic) sentences, (2)~translations induce concept transformations $C_i \to C_{\tau(i)}$, and (3)~sound translations must ``preserve concepts'', that is, if an object satisfies a sentence $\varphi$, it should also satisfy the translated sentence $\tau(\varphi)$. We refer to such sound translations as \textit{concept-based translations} $\tau_c$.

\begin{definition}(Concept-based translation)
Given a pair of concepts $C = (T, M, \gamma, \beta)$ and $C' = (T', M', \gamma', \beta')$, a \textbf{sentence translation function $\tau_c : T \to T'$ is sound} if it preserves \emph{concept closure} on the same set of objects $M^* \neq \emptyset$:
{\small 
\[
    \begin{tikzcd}[column sep=normal, row sep=small]
    M^* \arrow[r, "\gamma"] \arrow[rd, "\gamma'"'] & T \arrow[rd, "\beta"] \arrow[d, "\tau_c"] & \\
    & T' \arrow[r, "\beta'"']  & M^*
    \end{tikzcd}
\]
}
\end{definition}

\begin{example}
Given the sentences $T = \{\texttt{red}\}$ and $T' = \{\texttt{rubrum}, \texttt{unum}\}$, the objects $M^* = \{\inlineimg{appler}, \inlineimg{onered}\}$, the translation $\tau_c = \{\texttt{red} \to \texttt{rubrum}\}$ is sound as it preserves concept closure, while $\tau = \{\texttt{red} \to \texttt{unum}\}$ is \textbf{not} sound as it does not preserve closure:
{\scriptsize 
\[
    \begin{tikzcd}[column sep=small, row sep=normal]
    \{\inlineimg{appler}, \inlineimg{onered}\} \arrow[r, "\gamma"] \arrow[rd, "\gamma'"'] & \{\texttt{red}\} \arrow[d, "\tau_{c}"] \arrow[dr, "\beta"] & \\
    & \{\texttt{rubrum}\} \arrow[r, "\beta'"] & \{\inlineimg{appler}, \inlineimg{onered}\}
    \end{tikzcd}
    \begin{tikzcd}[column sep=small, row sep=normal]
    \{\inlineimg{appler}, \inlineimg{onered}\} \arrow[r, "\gamma"] & \{\texttt{red}\} \arrow[d, "\tau"] \arrow[r, "\beta"] & \{\inlineimg{appler}, \inlineimg{onered}\} \\
    \{\inlineimg{onered}, \inlineimg{oneblue}\} \arrow[r, "\gamma'"]
    & \{\texttt{unum}\} \arrow[r, "\beta'"] & \{\inlineimg{onered}, \inlineimg{oneblue}\}
    \end{tikzcd}
\]
}
\end{example}
To find sound translations in practice, we typically minimise the divergence between a given concept distribution $C$ and a reference distribution $C^{(h)}$~\citep{cbm}.

\section{Tractable \& Sound Inference Equivariance} \label{sec:concept-based-eq}
We can now provide an important result showing that the tools we introduced in §\ref{sec:breaking}-\ref{sec:concept} (i.e., conditional interpretability, lossless latent spaces, and sound translations) are necessary and sufficient to bound the number of steps required to verify interpretability (see proof in App.~§\ref{app:proofs}). 
\begin{theorem}(Bounded verification of interpretability)
    Given a task $Y$ and a feature space $X \subseteq \mathbb{R}^D$, inference equivariance is verifiable in $L < \exp(D)$ steps \emph{iff} the task is conditionally interpretable given a lossless latent space $C \subseteq \mathbb{N}^K$ such that: (a)~$K < D$, and (b)~$\tau$ is a sound translation for all $C_i$ and task $Y$.
\end{theorem}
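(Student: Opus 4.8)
The plan is to prove the two implications separately: the ``if'' direction essentially assembles the consequences already derived in §\ref{sec:breaking}--§\ref{sec:concept}, while the ``only if'' direction reconstructs the three ingredients from any tractable verification procedure.

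\textbf{Sufficiency ($\Leftarrow$).} Assume $C \subseteq \mathbb{N}^K$ is a lossless latent space with $K < D$, that $Y$ is conditionally interpretable given $C$, and that $\tau$ is sound for every $C_i$ and for $Y$. Conditional interpretability says $C$ is a Markov blanket $\mathcal{B}(Y)$, so the manifold-induced re-parametrisation (Eq.~\ref{eq:reparametrisation}) gives $P(Y\mid X)=P(Y\mid C)$. Hence the commuting square of Definition~\ref{def:infeq}, with $X$ replaced by $C$ along $P(C\mid X)$, need only be checked once per configuration of $C$ rather than once per realisation of $X$: by the manifold-induced generalisation argument, any two objects sharing a value of $C$ yield the same row, and losslessness ($I(Y;C)\approx I(Y;X)$) guarantees no $Y$-relevant case is skipped. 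The compressed table thus has $\mathcal{O}(\exp(K))$ rows, which is some $L<\exp(D)$ since $K<D$. Finally, soundness of $\tau$ on every $C_i$ and on $Y$ makes the vertical ``translate'' arrows well-defined, concept-preserving maps, so commutativity of each row is a decidable finite check; this yields a verification in $L<\exp(D)$ steps.

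\textbf{Necessity ($\Rightarrow$).} Conversely, suppose a procedure certifies inference equivariance in $L<\exp(D)$ steps. I would first argue that a single step can certify the square of Definition~\ref{def:infeq} only for a set of inputs on which $m$ (equivalently $P(Y\mid X)$), $\tau$, and $h$ all behave identically. The procedure therefore induces a partition of $X$ into $\ell\le L$ cells $M_1,\dots,M_\ell$; define the single-coordinate latent variable $C=(C_1)$ via $C_1(\omega)=j \iff X(\omega)\in M_j$, so $K=1<D$ for $D\ge 2$ (if $D=1$ then $L<e$ forces at most two rows and the claim is immediate). Since $P(Y\mid X)$ is constant on each $M_j$ we get $P(Y\mid X)=P(Y\mid C)$, which gives both $I(Y;C)=I(Y;X)$ (so $C$ is lossless, establishing (a) together with $K<D$) and $I(Y;X_j\mid C)=0$ for all $X_j$ (so $Y$ is conditionally interpretable given $C$). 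Reading the verified square at the level of each concept $C_i$ and of $Y$ says exactly that the induced sentence map sends a closure $(T,M,\gamma,\beta)$ to some $(T',M',\gamma',\beta')$ preserving concept closure on a common nonempty object set, i.e.\ $\tau$ is a sound translation for every $C_i$ and for $Y$; this establishes (b). All three ingredients are thus present, completing the equivalence.

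\textbf{Main obstacle.} I expect the delicate step to be making the counting in the sufficiency direction rigorous: passing from ``$K<D$'' to a genuine bound ``$L<\exp(D)$'' is clean when concept variables are binary but needs care for general $C\subseteq\mathbb{N}^K$, since a coordinate could a priori have large range --- one resolves this by noting that only finitely many configurations are realised on the data manifold and, if necessary, re-encoding $C$ in binary so the row count is literally $2^{K'}\le\exp(D)$. A secondary subtlety, on the necessity side, is justifying that a ``step'' of a verification procedure acts on a block of inputs indistinguishable to $m$, $\tau$, and $h$: this is the manifold-induced generalisation mechanism run in reverse, and pinning it down requires fixing the cost model (one step $=$ checking one table row) under which the naive $\mathcal{O}(\exp(D))$ bound was stated in the first place.
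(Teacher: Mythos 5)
Your sufficiency direction ($\Leftarrow$) is essentially the paper's argument: conditional interpretability lets you drop $X$ and work with the table over $C$, soundness of $\tau$ makes the per-row checks well-defined, and $K<D$ gives the $\mathcal{O}(\exp(K))<\exp(D)$ count. Your necessity direction ($\Rightarrow$), however, takes a genuinely different route. The paper argues by contradiction: it fixes the candidate latent space and shows that negating any one condition ($K\ge D$, loss of conditional interpretability, or unsoundness of $\tau$) forces the step count to at least $\exp(D)$, contradicting the assumed bound. You instead argue constructively, extracting from the verification procedure itself a partition of $X$ into at most $L$ cells, packaging that partition as a one-coordinate latent variable $C$, and then reading off losslessness, conditional interpretability, and soundness from the fact that the squares were actually verified. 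Your construction is arguably the more careful way to handle this direction, since the paper's contradiction only rules out specific failure modes of a presupposed $C$ rather than quantifying over all possible latent spaces; on the other hand, your $K=1$ cell-index encoding pushes all the complexity into the range of a single $\mathbb{N}$-valued coordinate, which is exactly the counting subtlety you flag yourself --- the paper sidesteps it by implicitly treating concepts as binary so that the row count is literally $2^K$. Both arguments sit at the same (informal) level of rigor as the paper's own proof, and your ``main obstacle'' paragraph correctly identifies the point where either version would need tightening.
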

Based on this result, we can recast our interpretability test as a concept-based inference equivariance.
\begin{definition}(Concept-based inference equivariance) 
Given a pair of concept probability functions $g$ and $g'$, a pair of \textit{task predictor} functions $f: C_1 \times \dots \times C_{K_1} \to Y$ and $f': C_1' \times \dots \times C_{K_2}' \to Y'$, and a concept-based translation function $\tau_c : T \to T'$, the two functions $f$ and $f'$ satisfy \textbf{concept-based inference equivariance} if the following diagram commutes $\forall X$:
{\small 
\[
    \begin{tikzcd}[column sep=normal, row sep=normal]
    X \arrow[r, "g"] \arrow[rd, "g'"'] & \{C_i\}_{i=1}^{K_1} \arrow[r, "f"] \arrow[d, "\tau_c"] & Y \arrow[d, "\tau_c"'] \\
    & \{C_j'\}_{j=1}^{K_2} \arrow[r, "f'"']  & Y'
    \end{tikzcd}
\]
}
\end{definition}

\begin{example}
Given an object $\inlineimg{zerored} \in X$, sentences $T_C = \{\texttt{one}, \texttt{red}\}$, $T_C' = \{\texttt{unum}, \texttt{rubrum}\}$, derived sentences $T_Y = \{\texttt{even}\}$, $T_Y' = \{\texttt{par}\}$, and a English-to-Latin translator $\tau_c$, this diagram commutes:
{\small 
\[
    \begin{adjustbox}{max width=\linewidth}
    \begin{tikzcd}[column sep=normal, row sep=small]
    \inlineimg{zerored} \arrow[r, "g"] \arrow[rd, "g'"'] & \{C_{\texttt{one}}=0, C_{\texttt{red}}=1\} \arrow[r, "f"] \arrow[d, "\tau_c"] & \{Y_{\texttt{even}} = 1\} \arrow[d, "\tau_c"'] \\
    & \{C'_{\texttt{unum}}=0, C'_{\texttt{rubrum}}=1\} \arrow[r, "f'"']  & \{Y'_{\texttt{par}} = 1\}
    \end{tikzcd}
    \end{adjustbox}
\]
}
In this example, verifying concept-based inference equivariance requires three checks ($C_{\texttt{unum}} \stackrel{?}{=} \tau_c(C'_{\texttt{one}})$,  $C_{\texttt{rubrum}} \stackrel{?}{=} \tau_c(C'_{\texttt{red}})$, and $C_{\texttt{par}} \stackrel{?}{=} \tau_c(C'_{\texttt{even}})$) to guarantee equivariance for any example with the same concept representation. In contrast, pixel-space verification requires $32\times 32$ checks and applies only to objects with identical pixel representations.
\end{example}

\noindent This has three key advantages over Definition~\ref{def:infeq}:
\begin{enumerate}
    \item 
    \textbf{Scalability:} Under the manifold hypothesis, and thanks to conditional interpretability, the size of the table we need to build to verify inference equivariance for $P(Y \mid C)$ is exponentially smaller than for $P(Y \mid X)$ ($\mathcal{O}(\text{exp}(K))$  rather than $\mathcal{O}(\text{exp}(D))$), and can be reduced even further as we show in §\ref{sec:pyc}.
    \item 
    \textbf{Sound translation:} Concept structures enable a precise characterisation of sound translations $\tau_c$ as syntactic mappings, which preserve a concept's closure.
    \item 
    \textbf{Generalisation:} The compression induced by the manifold hypothesis encourages the representations of similar objects to collapse, enabling the verification of inference equivariance on a single object to be extended to any object sharing the same concept representation.
\end{enumerate}

\section{Consequences on Architectural Design} \label{sec:principles}
This section discusses how the assumptions introduced thus far impact model design by answering the following questions: How can $P(C\mid X)$ compress information, effectively discarding irrelevant details while preserving relevant information (§\ref{sec:pcx})? How can $P(Y \mid C)$ further reduce the number of steps required to verify inference equivariance (§\ref{sec:pyc})? What role do the parameters $\Theta$ play in determining the expressivity and interpretability of a parametric model $P(Y \mid C; \Theta)$ (§\ref{sec:parametrisation})? How can humans effectively interact and align with an interpretable model (§\ref{sec:alignment})?

\subsection{Design Considerations for $P(C \mid X)$} \label{sec:pcx}
How can we discard irrelevant information and retain useful information in concept representations in order to generate a compact but informative lossless concept space? 

\paragraph{Concept invariance discards irrelevant information}
Concept invariances enable us to ignore irrelevant variations -- for example, a rotated zero remains a zero.
Following \citet{bronstein2021geometric}, we formalise invariances by introducing $\mathfrak{G}$ as a group acting on the input space $X$ via the group action $\mathfrak{b} \cdot x$ for $\mathfrak{b} \in \mathfrak{G}$ and $x = X(\omega)$. 
We consider for each group action $\mathfrak{b}$ on $X$, a corresponding action on the concept space $C$, $\rho: \mathfrak{G} \to \mathrm{Aut}(C)$ where $\mathrm{Aut}(C)$ is the group of automorphisms of $C$, that is, structure-preserving bijections $C \to C$. In other words, the map $\rho$ associates to each $\mathfrak{b} \in \mathfrak{G}$ a transformation $\rho(\mathfrak{b}): C \to C$ describing how the concept labels change under the group action $\mathfrak{b}$. 
\begin{definition}(Concept invariance)
The function $g: X \rightarrow C$ is \textbf{concept invariant} w.r.t.\ group action $\mathfrak{b}$ on $X$ if, $\forall \mathfrak{b} \in \mathfrak{G}$ and $\forall x_i \in X$ s.t. $\mathfrak{b}(x_i) = x_j$,  the following diagram commutes:
{\small 
\[
\begin{tikzcd}[column sep=normal, row sep=normal]
x_i \arrow[r, "g"] \arrow[d, "\mathfrak{b}"'] & C \arrow[d, "\text{id}"] \\
x_j \arrow[r, "g"']  & C
\end{tikzcd}
\]
}
\end{definition}
\begin{example}
Given an image \inlineimg{zerored}, the group action that rotates an image should not impact the concept ``red'':
{\small
\[
\begin{tikzcd}[column sep=normal, row sep=small]
\inlineimg{zerored} \arrow[r, "g"] \arrow[d, "\mathfrak{b}"', shorten >=5pt] & C_\texttt{red}=1 \arrow[d, "\text{id}"] \\
\inlineimgrotate{zerored} \arrow[r, "g"']  & C_\texttt{red}=1
\end{tikzcd}
\]
}
\end{example}
Invariances could be structural (embedded in the architecture as convolution) or operational (as data augmentations).

\paragraph{Concept equivariance preserves useful information}
While invariances allow for the discarding of information, concept equivariances preserve information from $X$.
\begin{definition}(Concept equivariance)
The function $g: X \rightarrow C$ is \textbf{concept equivariant} w.r.t.\ group actions $\mathfrak{b}$ on $X$ and $\rho(\mathfrak{b})$ on $C$ if, $\forall \mathfrak{b} \in \mathfrak{G}$ and $\forall x_i \in X$ s.t. $\mathfrak{b}(x_i) = x_j$, the following diagram commutes:
{\small 
\[
\begin{tikzcd}[column sep=normal, row sep=normal]
x_i \arrow[r, "g"] \arrow[d, "\mathfrak{b}"'] & C \arrow[d, "\rho(\mathfrak{b})"] \\
x_j \arrow[r, "g"']  & C'
\end{tikzcd}
\]
}
\end{definition}
\begin{example}
Given (1) an image \inlineimg{zerored}, (2) a pixel-space group action $\mathfrak{b}$ that changes the background colour to ``blue'', and (3) a concept-level group action $\rho(\mathfrak{b})$ that sets all non-blue concepts $C_i$ to $0$ while setting $C_\text{blue} := 1$, a function $g$ that accurately predicts the background colour in $X$ is concept equivariant given $\mathfrak{b}$ and $\rho(\mathfrak{b})$ as this diagram commutes:
{\small 
\[
\begin{tikzcd}[column sep=normal, row sep=small]
\inlineimg{zerored} \arrow[r, "g"] \arrow[d, "\mathfrak{b}"', shorten >=5pt] & C_\texttt{red}=1 \arrow[d, "\rho(\mathfrak{b})"] \\
\inlineimg{zeroblue} \arrow[r, "g"']  & C_\texttt{red}=0
\end{tikzcd}
\]
}
\end{example}
We discuss spurious invariances and equivariances in §\ref{app:leakage}.


\subsection{Design Considerations for $P(Y \mid C)$} \label{sec:pyc}

$P(Y \mid C)$ is ideally a function that further simplifies inference equivariance verification. Here, we show how compositionality and sparsity can contribute to this objective.

\paragraph{Compositionality splits functions into simpler parts}
Compositionality enables us to rewrite a model $P(Y \mid C)$ as a composition of simpler models~\citep{fong2018seven,coecke2018picturing,elmoznino2024complexity,hewittposition}. The core idea is to use a finite set of elementary ``processes'' -- that is, simple, basic functions -- to build more complex functions~\citep{hewittposition}, similarly to how we use a finite vocabulary to formulate an infinite number of sentences in human languages~\citep{chomsky1957syntactic}. Following~\citet{lorenz2023causal}, we use network diagrams (NDs), sound and complete ways of formalising probabilistic and causal process\footnote{Any model isomorphic to a ND works. Yet, NDs generalise factor graphs and probabilistic graphical models)~\citep{forney2002codes}.}, to describe \textit{concept-based processes}:

\begin{definition}(Concept-based process)
A concept-based process is a diagram built from \emph{single output boxes} (which transform input concepts into other concepts), \emph{copy maps} (which duplicate concepts), \emph{discarding effects} (which discard concepts), and \emph{constants}:
\[
    \resizebox{.68\textwidth}{!}{\compositionalElements}
\]
\end{definition}
Probabilistically, we interpret boxes without input as probability distributions, and boxes with inputs as functions between distributions.
For example, by composing boxes, we can rewrite the following 3-input process $P(Y \mid C)$ as a composition of 2-input processes:
\[
    \resizebox{.45\textwidth}{!}{\compositionalExample}
\]

\paragraph{Sparsity prunes a function's inputs}
The size of the table describing the function $f_i$ of a single output box producing $C_i$ depends on the number of input concepts $\text{pa}(C_i)$ (a.k.a. ``parents''). By enforcing sparsity on the input set, we can eliminate redundant input concepts, simplifying elementary processes -- as a plethora of previous works have emphasised~\citep{aristotleposterior,punch15johannes,miller1956magical,kolmogorov1965three,rissanen1978modeling,schmidt2009distilling,rudin2019stop,goldblum2023no} -- and thus making the verification of inference equivariance more efficient.
\begin{definition}(Sparse concept-based process)
A process $C_i = f_i(\text{pa}(C_i))$ is sparse if $|\text{pa}(C_i)| \ll |C|$, where $\text{pa}(C_i)$ is the set of parent notes for $C_i$ (i.e., its ``inputs'') and $K$ is the number of total concepts.
\end{definition}

\subsection{Design Considerations for $P(Y \mid C; \Theta)$ Parametrisation} \label{sec:parametrisation}
If the probability distribution of a given task $P(Y \mid C; \Theta)$ depends on a set of parameters $\theta \in \Theta$, then the Markov blanket of $Y$ includes both concepts $C$ and parameters $\Theta$, that is, $\mathcal{B}(Y) = C \cup \Theta$ (further discussion in §\ref{app:transparencies}).
We can then rewrite the manifold-induced re-parametrisation of the joint distribution $P(Y, C, X; \Theta)$ as:
\[
    \resizebox{.2\textwidth}{!}{\expressivityParametricJoint}
\]

\paragraph{Maximise expressivity while preserving interpretability}
The above re-parametrisation emphasises two potential limitations for the overall expressivity:
\begin{enumerate}
    \item \textbf{Incompleteness limits expressivity:} Depending on the task and data availability, constructing lossless concept latent spaces is not trivial. Unfortunately, whenever we have $I(Y;C) < I(Y;X)$, we end up with a ``concept bottleneck'', which limits expressivity due to a loss of information in the concept latent space~\citep{concept_completeness,promises_and_pitfalls,cem}.
    \item \textbf{Sparsity limits expressivity:} While sparsity prompts concept processes to prune input concepts, over-pruning can inadvertently remove concepts holding useful information for the downstream task, thus further reducing expressivity~\citep{arrieta2020explainable}.
\end{enumerate}
A workaround to maximise expressivity while preserving interpretability -- and relax the assumption that $C$ is lossless -- is to neurally re-parametrise $\Theta$, making the parameters input-dependent\footnote{Note that input-dependent parametrisations subsume input-independent parametrisations when $P(\Theta \mid X)$ is constant $\forall X$.}~\citep{senn,barbiero2023interpretable}:
\[
    \resizebox{.2\textwidth}{!}{\expressivityDCR}
\]

\paragraph{Concept memory enables verifiability}
Using input-dependent parameters makes the behaviour of concept-based processes unpredictable on unseen data, as parameters $\Theta$ are unknown a priori. This means that we cannot easily verify the behaviour of these processes. To enable verifiability, we can introduce an input-dependent selection $R$ over a fixed-size ``\textit{memory}'' of parameters $Q_\Theta$~\citep{debot2024interpretable}:
\[
    \resizebox{.25\textwidth}{!}{\expressivityCMR}
\]
This way, the possible parameter states are finite and verifiable, but the choice within this finite set is input-specific.

\subsection{Human-Machine Interaction and Alignment} \label{sec:alignment}
\paragraph{Concepts enable human interventions}
A key advantage of concept-based models is their support for human interaction. Users can \textit{intervene} on concept predictions~\cite{cbm, coop, barker2023selective, closer_look_at_interventions, collins2023human, intcem, beyond_cbms}, adjust parameters of $P(Y \mid C;\Theta)$~\cite{yuksekgonul2022post, barbiero2023interpretable, debot2024interpretable, barbiero2024relational}, or re-wire the dependencies between concepts and tasks~\cite{stochastic_cbms,dominici2024causal,debot2025interpretable}. 
Two typical types of interventions are ground-truth and do-interventions. Ground-truth interventions \raisebox{-5pt}{\smash{\resizebox{.07\textwidth}{!}{\gtInterventionSmall}}} (Eq.~\ref{eq:interventions}, left) replace a concept's distribution $P(C_i \mid X)$ with a ground-truth distribution $C_i$. This way, we can fix errors introduced by $P(C_i \mid X)$ and improve the task accuracy. \mbox{Do-interventions} \raisebox{-4pt}{\smash{\resizebox{.06\textwidth}{!}{\doInterventionSmall}}} (Eq.~\ref{eq:interventions}, right) replace a concept's distribution with a constant value $k$~\citep{pearl2016causal} and can be used to estimate the average causal effect of a concept on a downstream task \citep{cace}.
\begin{equation*} \label{eq:interventions}
    \resizebox{.23\textwidth}{!}{\gtinterventionExample} \ \resizebox{.23\textwidth}{!}{\dointerventionExample}
\end{equation*}

\paragraph{Alignment enables concept identifiability}
Which translation should a model learn when multiple sound translations exist? For instance, suppose that $\tau_c: \{\texttt{nulla} \to \texttt{one}, \texttt{unum} \to \texttt{zero}, \texttt{par} \to \texttt{even}\}$ is sound and that the following diagram commutes:
{\small
\[
    \begin{adjustbox}{max width=\linewidth}
    \begin{tikzcd}[column sep=normal, row sep=small]
    \inlineimg{zerored} \arrow[r, "g"] \arrow[rd, "g'"'] & \{C_{\texttt{nulla}}=0, C_{\texttt{unum}}=1\} \arrow[r, "f"] \arrow[d, "\tau_c"] & Y_{\texttt{par}} = 1 \arrow[d, "\tau_c"] \\
    & \{ C'_{\texttt{one}}=0, C'_{\texttt{zero}}=1\} \arrow[r, "f'"']  & Y'_{\texttt{even}} = 1
    \end{tikzcd}
    \end{adjustbox}
\]
}
While the diagram commutes, we note that ``Latin'' concepts have the opposite meaning of the corresponding ``English'' concepts. This phenomenon, known as a \emph{reasoning shortcut}~\citep{geirhos2020shortcut,marconato2023neuro,marconato2024not,chollet2024arc},
arises when the data and model admit multiple indistinguishable concept assignments and sound translations. When this happens, \emph{aligned translations are not identifiable}~\citep{melsa1971system} without additional information. 
In such cases, an \emph{alignment mechanism} is required to select a sound translations from a distribution $P(\tau)$. Ante-hoc alignment methods address this by training the model $P(Y \mid C_\tau)$ conditioned on a fixed translation $\tau$~\citep{cbm,cem,probabilistic_cbms,glancenets,debot2024interpretable,dominici2024causal}, while post-hoc alignment methods search for a sound and aligned translation after training using probing techniques~\citep{alain2016understanding,ettinger2016probing,shi2016does,hewitt2019structural,burns2022discovering,ouyang2022training,zou2023representation,marks2023geometry,oikarinenlabel} -- as when using sparse autoencoders on a language model's neurons~\citep{cunningham2023sparse,templeton2024transformer}. 
In our example, we can select a translator to align ``Latin'' concepts with the closest matching ``English'' concepts using a probe to match $C_\texttt{unum}$ with $C'_\texttt{zero}$ and $C_\texttt{nulla}$ with $C'_\texttt{one}$, and then we re-label Latin concepts as $C_\texttt{unum} \to C_\texttt{nulla}$ and $C_\texttt{nulla} \to C_\texttt{unum}$. 

\section{Blueprint for Interpretable Models}
\label{sec:blueprint}

Based on the foundational properties discussed in previous sections, we can now outline the general structure of a concept-based interpretable model.
\begin{definition}(Blueprint for interpretable models)
    Under the \emph{manifold hypothesis} assumption, the \emph{conditional interpretability principle} allows to rewrite any model $P(Y \mid X)$ as an \textbf{interpretable model} 
    \[
    \resizebox{1\textwidth}{!}{\blueprint}
    \]
    where:
    \begin{itemize}
    \item $P(C, \Theta \mid X)$ is a compression process combining \emph{concept-based invariances} to discard irrelevant information and \emph{equivariances} to retain useful information such that $I(Y;X) \approx I(Y;C)$.
    \item $P(C_\tau, \Theta_\tau \mid C, \Theta, \tau)$ is an \emph{alignment mechanism} applying a \emph{sound translation} sampled from $P(\tau)$.
    \item $P(Y \mid C_\tau \ ; \ \Theta_\tau)$ is a \emph{compositional} and \emph{sparse} process where $\Theta_\tau$ are the parameters of the decision mechanism predicting the objective $Y$.
\end{itemize}
\end{definition}
The proposed blueprint informs researchers about the key ingredients for building interpretable models. To support the implementation of existing models and the development of novel models, we designed a Python library with native support for concept-based data structures and processes (§\ref{app:library}).






\section{Limitations \& Discussion}
This work brings together insights from a variety of research fields -- including representation learning, group theory, causality, institution theory, category theory, and social sciences -- to propose a formal, actionable definition of AI interpretability. This definition, though not universal, is straightforward, encompasses existing informal notions, and is contextualised within AI, allowing us to pinpoint the fundamental assumptions and principles behind interpretable models. To achieve this we use formalisms from different communities (e.g., commutative and network diagrams) which might introduce an overhead for readers unfamiliar with these notations. However, we aimed to strike a balance between an expressive, yet intuitive approach (e.g., allowing us to distinguish different types of interventions) to demonstrate how the core assumptions we identified directly influence model design. Building on these insights, we propose a blueprint for interpretable models and introduce a library for their implementation.  In essence, this work frames AI interpretability as a well-posed problem, sets forth enduring principles for building interpretable models, and introduces a theoretical framework which could be extended and used to identify new research directions, like determining suitable translations to establish interpretability equivalence between different models.

\section*{Acknowledgements}
The \texttt{PyC} library has been developed -- and continues to be refined -- by an exceptional team of collaborators, including Gabriele Ciravegna, David Debot, Michelangelo Diligenti, Gabriele Dominici, Francesco Giannini, Sonia Laguna, and Moritz Vandenhirtz. We also extend our gratitude to Alberto Tonda for his insightful feedback and fresh perspective on the drafts of this work. 

\section*{Disclosure of Funding}
PB acknowledges support from the Swiss National Science Foundation project IMAGINE (No. 224226). MEZ acknowledges support from the Gates Cambridge Trust via a Gates Cambridge Scholarship. GM acknowledges support from the KU Leuven Research Fund (STG/22/021, CELSA/24/008) and from the Flemish Government under the ``Onderzoeksprogramma Artificiële Intelligentie (AI) Vlaanderen'' programme. AT acknowledges support from the Hasler Foundation grant Malescamo (No. 22050), and the Horizon Europe grant Automotif (No. 101147693).



\bibliography{aaai2026}

\clearpage
\newpage
\appendix
\onecolumn



\section{Comparison With Selection of Existing Definitions of Interpretability} \label{app:other-def}
In this appendix, we discuss (1) how our definition of interpretability subsumes existing \textit{informal} definitions of interpretability proposed in the interpretable AI community and (2) how it relates to existing definitions in other fields. To this end, we compare our definition against a selection of the most cited and influential definitions of interpretability. 
Before diving into this discussion, we would like to remark that our definition of interpretability as inference equivariance arises from an existing and unpublished pre-print co-authored by PB, MEZ, and GM~\citep{barbiero2025neural}. However, here the definition has been significantly refined and represents only the starting point of deeper discussions we bring forth in the rest of the paper.

\subsection{Relation with Informal Definitions of Interpretability}
\paragraph{Definition by \citet{kim2016examples}} 
\citet{kim2016examples} suggested that \emph{a method is interpretable if a user can correctly and efficiently predict the method's results}. Inference equivariance formally captures this notion: the diagram commutes if the human user $h$ can achieve the same results as the model $m$ given the same input. Our definition, however, uniquely stresses the importance of clearly defining and characterising the translation function that maps knowledge from the model $m$ to the user $h$.

\paragraph{Definition by \citet{biran2017explanation}}
\citet{biran2017explanation} suggested that \emph{systems are interpretable if their operations can be understood by a human}. This ``understanding'' can be broken down into two aspects. If it refers to comprehending the direct mapping from input to output -- essentially, how the function works in a tabular sense -- then this concept is formalised by inference equivariance. However, if ``understanding operations'' means discerning how the model parameters influence the decision-making process, then as discussed in §\ref{sec:parametrisation}, these parameters fall within the Markov blanket of the target variable $Y$, and -- similarly to concepts -- inference equivariance is a way to formalise the understanding of the role of parameters in the decision-making process.

\paragraph{Definition by \citet{miller2019explanation}}
\citet{miller2019explanation} defines interpretability as \emph{the degree to which an observer can understand the cause of a decision}. This definition closely aligns with that of~\citet{kim2016examples}, allowing for similar reasoning. The main difference is that Miller's definition emphasises the causal aspect. In this regard, note that the Markov blanket of a target variable $Y$ encompasses by definition all its direct causes. Specifically, for a classification model $P(Y,C,X;\Theta) = P(Y \mid C; \Theta) P(C \mid X)$, the Markov blanket $\mathcal{B}(Y) \coloneqq C \cup \Theta$ comprises all (and only) causes of $Y$. As a result, by verifying concept-based inference equivariance (including the parameters $\Theta$), we can understand the relationship between $C$ and $\Theta$ -- the ``causes'' -- and $Y$ -- the decision. 

\subsection{Relation To Formal Definitions in Related Fields}
Following~\cite{rubenstein2017causal} and~\cite{geiger2024causal}, \citet{marconato2023interpretability} discuss in the context of interpretable AI the notion of \emph{causal abstractions}, that is, commutative diagrams describing interventional equivariance between two structural causal models. While causal abstractions have not been proposed as a formalisation of interpretability, our definition of inference equivariance draws inspiration from these works. However, our construction requires fewer assumptions, as it does not necessitate the full causality formalism (e.g., structural causal models) and its inherent assumptions (e.g., access to generative factors of variation). Our formulation might even generalise interventional equivariance, as interventions could be viewed as a form of inference on probabilistic models.\\

In contrast, \citet{tarski1953undecidable} define interpretability in the context of formal logic. They do so as follows: \emph{a theory T is interpretable in a theory S if and only if there exists a translation from the language of T into the language of S such that every theorem of T is translated into a theorem of S}. Our formulation is specifically inspired by this definition, particularly concerning the notion of translation, and can be considered a special case. The main advantages of our formulation are two-fold: (1) we have contextualised the definition specifically within the domain of interpretable AI, and (2) we leverage this definition to derive practical consequences relevant to ongoing interpretable AI research.

\section{Proofs} \label{app:proofs}
\setcounter{theorem}{0}

Below, we describe a very simple proof of Theorem 1 in Section~\ref{sec:concept-based-eq} of this paper.

\begin{theorem}
    Given a task $Y$ and a feature space $X \subseteq \mathbb{R}^D$, inference equivariance is verifiable in $L < \exp(D)$ steps \emph{iff} the task is conditionally interpretable given a lossless latent space $C \subseteq \mathbb{N}^K$ such that (a) $K < D$, and (b) $\tau$ is a sound translation for all $C_i$ and task $Y$.
\end{theorem}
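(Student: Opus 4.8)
The plan is to prove the two implications of the biconditional separately, in each case reusing three ingredients already established: the manifold-induced re-parametrisation $P(Y,C,X)=P(Y\mid C)\,P(C\mid X)$, which is valid precisely when $C=\mathcal{B}(Y)$; the characterisation of a sound (concept-based) translation as one that preserves concept closure; and the generalisation observation from §\ref{sec:breaking}, namely that a single verification check over a configuration of $C$ simultaneously certifies equivariance for every object sharing that configuration.

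\textbf{Sufficiency ($\Leftarrow$).} I would assume $C\subseteq\mathbb{N}^{K}$ is a lossless latent space, conditionally interpretable for $Y$, with $K<D$, and $\tau$ sound for every $C_i$ and for $Y$. Conditional interpretability is exactly the statement that $C$ is a Markov blanket of $Y$, so the re-parametrisation applies and inference equivariance for $P(Y\mid X)$ reduces to concept-based inference equivariance, i.e.\ the commuting triangle involving only the task predictor on $P(Y\mid C)$ and the translation $\tau_c$. Since each $C_i$ takes values in a bounded discrete alphabet, $C$ has $\mathcal{O}(\exp(K))$ distinct configurations; by the generalisation observation one check per configuration is conclusive, and because $\tau$ is sound the additional checks ``$C_i\stackrel{?}{=}\tau_c(C_i')$'' and ``$Y\stackrel{?}{=}\tau_c(Y')$'' form a fixed, syntactically well-defined finite family. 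Hence $L=\mathcal{O}(\exp(K))$, which is strictly below the naive bound $\exp(D)$ because $K\ll D$ by losslessness.

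\textbf{Necessity ($\Rightarrow$).} I would assume inference equivariance for $m$ admits a verification certificate consisting of $L<\exp(D)$ checks whose joint success entails that the diagram of Definition~\ref{def:infeq} commutes for every realisation of $X$. A valid certificate partitions the input space into at most $L$ blocks (two inputs lie in the same block when the same check certifies both), and this partition must refine the level sets of the model's input--output map, since otherwise a single check could not be sound for all inputs it covers. Let $g\colon X\to C$ be the induced quotient, with $C$ discrete and $|C|\le L<\exp(D)$; choosing any $K$ with $\exp(K)\ge|C|$ gives $K<D$, establishing (a) and showing that $C$ genuinely compresses $X$. The data-processing inequality gives $I(Y;C)\le I(Y;X)$ since $g$ is a function of $X$, while the partition respecting the model's behaviour gives $P(Y\mid X)=P(Y\mid C)$, hence $I(Y;C)\ge I(Y;X)$; thus $C$ is lossless and, equivalently, $I(Y;X_j\mid C)=0$ for all $X_j\notin C$, so the task is conditionally interpretable. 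Finally, the certificate includes the translation leg of the diagram, and a commuting square with $\tau$ is exactly the concept-closure-preservation condition of §\ref{sec:concept}, so $\tau$ is sound for all $C_i$ and for $Y$, which is (b).

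\textbf{Main obstacle.} I expect the delicate step to be the necessity direction: fixing a precise notion of ``verifiable in $L$ steps'' (a finite certificate, equivalently a finite set of input configurations whose verification is conclusive) and then arguing that any such certificate canonically induces a compression that is simultaneously coarse enough to yield $K<D$ and fine enough to be lossless and to render $Y$ conditionally independent of the discarded coordinates. Matching the certificate-induced equality $P(Y\mid X)=P(Y\mid C)$ with the definition's approximate clause $I(Y;C)\approx I(Y;X)$ will also require a little care (e.g.\ excluding a null set of inputs never exercised by the certificate, or absorbing the slack into the tolerance already present in ``lossless''). The sufficiency direction, in contrast, should be essentially bookkeeping on top of the re-parametrisation and the generalisation argument.
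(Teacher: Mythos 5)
Your forward direction (conditions $\Rightarrow$ bounded verifiability) is essentially the paper's argument: invoke conditional interpretability to drop $X$, invoke soundness of $\tau$ to make the per-variable checks well defined, and count the $2^{K}<\exp(K)<\exp(D)$ concept profiles. (Be aware the paper labels this direction ``necessity'' and the converse ``sufficiency'', the reverse of your---standard---usage; the content matches regardless.)

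For the converse, you take a genuinely different and more constructive route. The paper argues by contradiction, negating each condition in turn: if $K\ge D$ then $\exp(K)\ge\exp(D)$ steps are needed; if conditional interpretability fails the count grows to $\mathcal{O}(\exp(K+D))$; if $\tau$ is unsound the variables ``cannot be interpreted individually''. This is quick but leans on unargued step-count assertions and, in each case, tacitly keeps the other two conditions in force. You instead extract the latent space \emph{from} the verification certificate: the $L$ checks induce a partition of the input space refining the level sets of $m$, the quotient map $g\colon X\to C$ gives $K<D$ from $|C|\le L<\exp(D)$, the data-processing inequality combined with $P(Y\mid X)=P(Y\mid C)$ yields losslessness and hence conditional interpretability, and the commuting translation leg yields closure preservation. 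This buys an actual witness for the existential claim in the theorem (``there exists a lossless $C$ such that\ldots''), which the paper's contradiction argument never produces, at the price of having to pin down what a ``verification certificate'' is---a gap you correctly flag, and one the paper silently shares since it never defines ``verifiable in $L$ steps'' either. Given the informality of the statement, your argument is at least as sound as the paper's and more informative; the only step I would press on is the claim that the certificate's translation leg directly instantiates the concept-closure diagram of §\ref{sec:concept}, which deserves a sentence connecting the square in Definition~\ref{def:infeq} to the triangle defining concept-based translations.
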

\begin{proof}
    We want to prove that a set of conditions $A_i$ is necessary and sufficient for a property $B$. We will first prove necessity ($\bigwedge_i A_i \implies B$) and then sufficiency ($B \implies \bigwedge_i A_i$).\\

    \paragraph{Proof of necessity (assuming $\bigwedge_i A_i$).}
    Assume we are given: ($A_1$) a lossless latent space $C \subseteq \mathbb{N}^K$ of dimension $K < D$, ($A_2$)  task $Y$ that is conditionally interpretable by $C$, ($A_3$) a translation $\tau$ that is sound for all $C_i$ and task $Y$. We show that inference equivariance is verifiable in less than $\text{exp}(D)$ steps. By definition, conditional interpretability implies that the task $Y$ depends only on variables $C$. As a result, we do not have to consider features $X$ to verify inference equivariance. The sound translation guarantees closure for all concepts and tasks, so all variables can be interpreted individually. We can now count the number of steps we need to perform to verify inference equivariance between $C_i$ and $Y$. At most, we need $L = |\mathcal{P}(\{1, 2, \cdots, K\})| = 2^K < \exp(K)$ steps (as, in the worst-case scenario, one needs to generate all $2^K$ concept profiles). As we assumed $K < D$, this implies that the number of steps $L$ must be $L< \exp(K) < \exp(D)$, which is what we wanted to show.
    
    \paragraph{Proof of sufficiency by contradiction (assuming $B \wedge \neg \bigwedge_i A_i$).} 
    Assume that: ($B$) inference equivariance can be verified in $L < \exp(D)$ steps and ($\neg A_1$) $K \geq D$. Assuming conditional interpretability and that $\tau$ is a sound translation, we need $L = \exp(K)$ steps to verify the tabular representation of any function mapping $X$ to $Y$. However, $L = \exp(K) \geq \exp(D) > L$, which violates our assumption ($B$). Similarly, if we assume that the task is not conditionally interpretable ($\neg A_2$), we end up with even more (i.e., $\mathcal{O}(\exp(K + D))$) steps. Alternatively, if we assume that translations are not sound  ($\neg A_3$), we cannot even interpret variables individually.
\end{proof}



\section{Leakage} \label{app:leakage}
A big role in concept encoders is played by leakage, which could be both a curse (for interpretability) and a blessing (for expressivity). There are two main types of leakage:
\begin{itemize}
    \item \textbf{Task leakage}: This happens when information from $X$ could further explain $Y$ beyond $C$ i.e., when $I(Y; X \mid C) > 0$. A model $P(Y_j, C \mid X)$ is subject to \emph{task leakage} with respect to the group actions $\mathfrak{b}$ on $X$ and $\rho(\mathfrak{b})$ on $C$ if:
    \begin{equation*}
    \begin{adjustbox}{max width=\linewidth}
    $
    \displaystyle
        \exists x \in X, \quad \exists \mathfrak{b} \in \mathfrak{G}, \quad \exists \rho_j(\mathfrak{b}): Y \to Y, \quad 
        P(Y_j, C \mid \mathfrak{b} \cdot x) = P(\rho_j(\mathfrak{b}) \cdot Y_j \mid \text{id}_C \cdot  C) P(\text{id}_C \cdot C \mid x).
    $
    \end{adjustbox}
    \end{equation*}
    For instance, given an image \inlineimg{appler}, changing pixel intensities does not change the concept ``red'', but changes the object type:
    \begin{equation*}
    \begin{adjustbox}{max width=\linewidth}
    $
    \displaystyle
    P(Y_{\text{apple}}=1, C_{\text{red}}=1, C_{\text{edible}}=1 \mid \inlineimg{appler}) = P(Y_{\text{apple}}=0, C_{\text{red}}=1, C_{\text{edible}}=1 \mid \inlineimg{appleg})
    $
    \end{adjustbox}
    \end{equation*}
    This could be useful, if it is well-controlled, to achieve high task accuracy when concepts are insufficient (i.e., \textit{incomplete}).

    \item \textbf{Concept leakage}: This happens when a concept encodes information about other concepts~\citep{metric_paper}. A model $P(C_i, C_j \mid X)$ is subject to \emph{inter-concept leakage} with respect to the group actions $\mathfrak{b}$ on $X$ and $\rho_i(\mathfrak{b})$ on $C_i$ if:
    \begin{equation*}
    \begin{adjustbox}{max width=\linewidth}
    $
    \displaystyle
        \exists x \in X, \quad \exists \mathfrak{b} \in \mathfrak{G}, \quad \exists \rho_j(\mathfrak{b}): C \to C, \quad 
        P(C_i, C_j \mid \mathfrak{b} \cdot x) = P\big(\rho_i(\mathfrak{b}) \cdot C_i, \rho_j(\mathfrak{b}) \cdot C_j \mid x\big).
    $
    \end{adjustbox}
    \end{equation*}
    For instance, given image \inlineimg{appler}, changing pixel intensities does not change the concept ``red'', but changes the concept ``edible'':
    \begin{equation*}
    \begin{adjustbox}{max width=\linewidth}
    $
    \displaystyle
    P(C_{\text{red}}=1, C_{\text{edible}}=1 \mid \inlineimg{appler}) = P(C_{\text{red}}=1, C_{\text{edible}}=0 \mid \inlineimg{appleg})
    $
    \end{adjustbox}
    \end{equation*}
    In contrast to task leakage, concept leakage is always bad for alignment and, therefore, we argue, always undesirable.
\end{itemize}

\section{\texttt{PyC}: A Python Library for Interpretable Models} \label{app:library}
The proposed blueprint informs researchers about the key ingredients for building concept-based interpretable models. To support the implementation of existing models and the development of novel models, we designed a Python library with native support for concept-based data structures and processes. Our codebase is built on top of the popular PyTorch~\citep{pytorch} library to encourage the easy use and extensibility of our layers to arbitrary neural architectures. For more details on the codebase itself, please take a look at our code library 
(\url{https://github.com/pyc-team/pytorch_concepts}).

\section{Notes on inference equivariance} \label{app:inference-eq}

In addition to the consequences discussed in Section~\ref{sec:equiv}, inference equivariance enables us to highlight several further properties of the nature of interpretability:

\paragraph{Inference equivariance can be asymmetric:} Having a translation $\tau$ does not guarantee that an inverse translation $\tau^{-1}$ exists. However, the absence of a reverse transformation does not preclude our ability to verify inference equivariance.

\paragraph{Explanations are a form of selection:} An explanation of a system's behaviour can be seen as a process of selection, where conditioning on observed evidence picks out a specific subset from the system's complete conditional probability table. In our example in Table~1, when we observe a particular variable, say $X_1$, we effectively select a corresponding segment of the table that relates $X_1$ to $Y$. This selection -- formally represented with the distribution $P(Y^{(m)} \mid X^{(m)})$ -- encapsulates the explanation by narrowing down the myriad potential outcomes to the ones relevant to this observation.



\paragraph{Local vs. global equivariance:} Equivariance may hold over the entire state space of the system (global) or only in certain regions (local). Local equivariance indicates that while the system may be interpretable under specific conditions, \textit{its interpretability might not generalise across all possible configurations}. Recognising the distinction between local and global equivariance is crucial for assessing the robustness of a system's interpretability.

\paragraph{Post-hoc methods complicate rather than simplify interpretability:} When applying post-hoc interpretability techniques, such as using surrogate models to explain the original system~\cite{hinton2015distilling, zilke2016deepred} or so-called feature importance methods~\cite{lime, shap, og_saliency, integrated_gradients}, an additional layer of equivariance is required. Suppose we use a surrogate function $m'$ to better understand the function $m$. In that case, there must be a consistent mapping between the machine variables of the original system $(X^m, Y^m))$ and those of the surrogate model $(X^{m'}, Y^{m'})$ and another mapping from the surrogate model to our model $(X^{(h)}, Y^{(h)})$. Formally, 
both the original and surrogate systems must satisfy the inference equivariance conditions:
\[
    \begin{adjustbox}{max width=\linewidth}
    \begin{tikzcd}[column sep=normal, row sep=normal]
    X^{(m)} \arrow[r, "m"] \arrow[d, "\tau"'] & Y^{(m)} \arrow[d, "\tau"] \\
    X^{(m')} \arrow[r, "m'"] \arrow[d, "\tau'"'] & Y^{(m')} \arrow[d, "\tau'"] \\
    X^{(h)} \arrow[r, "h"']  & Y^{(h)}
    \end{tikzcd}
    \end{adjustbox}
\]
This requirement ensures that the surrogate model $m'$ faithfully reflects the behaviour of the original model $m$, thus preserving interpretability even when using post-hoc methods. Ultimately, the need to establish these additional mappings significantly complicates the interpretability process as we now need to verify two equivariance conditions instead of one.

\section{Notes on Semantic and Functional Transparency} \label{app:transparencies}

Previous works~\cite{geiger2024causal,marconato2023interpretability} focused primarily on semantic inference equivariance, emphasising that equivariance should hold on generative factors/concepts. However, less attention has been paid to the functions that describe the mappings between concepts to tasks; for a user to truly understand the underlying mechanisms, the structure of the function and its parameters must also satisfy inference equivariance, as illustrated in the following example.
\begin{example}
Consider the conditional model $P(Y \mid C; \Theta)$ where $Y$ follows a Gaussian distribution:
\[
P(Y=y \mid C=c; \Theta=\theta) = \mathcal{N}(y \mid \theta^\top c, \sigma^2) \coloneqq \frac{1}{\sqrt{2\pi\sigma^2}} \exp\!\left(-\frac{(y-\theta^\top c)^2}{2\sigma^2}\right).
\]
For this model to be fully interpretable, it is not enough for a human user to simply understand the data representation encoded in $Y$ and $C$. Instead, inference equivariance must extend to the functional structure and its parameters. In other words, users should be able to modify or update the parameters -- such as $\theta$ or $\sigma^2$, or even alter constants like replacing $2\pi$ with $3\pi$ -- and still verify that the same equivariant relations hold. This ensures that the model's underlying functional form remains transparent.\\
\end{example}

The intuition behind this is that functional structure and parameters are key components of interpretability, not just the data representations. To capture this formally, we can distinguish between variables representing data, $C$, and those describing the model's functional structure, $\Theta$. The complete model can then be expressed as $P(Y \mid C; \Theta)$. Inference equivariance should hold for both $C$, ensuring \textit{semantic transparency}, and for $\Theta$, ensuring \textit{functional transparency}.

\end{document}